\DeclareMathOperator*{\argmin}{arg\,min}
\newtheorem{thm}{Theorem}
\newtheorem{lemma}{Lemma}
\newtheorem{assumption}{Assumption}
\tikzstyle{block} = [draw, rectangle, minimum height=2em, minimum width=4em]
\tikzstyle{sum} = [draw, circle, node distance=1.5cm]
\tikzstyle{input} = [coordinate]
\tikzstyle{output} = [coordinate]
\tikzstyle{feedback} = [coordinate]
\tikzset{edge/.style = {->,> = latex}}
\title{\LARGE \bf
Boosting Fairness and Robustness in Over-the-Air Federated Learning*

    \author{Halil Yigit Oksuz$^{1,2}$, Fabio Molinari$^{1}$, Henning Sprekeler$^{2,3}$, J\"{o}rg Raisch$^{1,2} $ }

    \thanks{* This work has been funded by the Deutsche Forschungsgemeinschaft (DFG, German Research Foundation) under Germany’s Excellence Strategy – EXC 2002/1 “Science of Intelligence” – project number 390523135.}
    
    \thanks{$^{1}$ H.Y. Oksuz, F. Molinari, and J. Raisch are with the Control Systems Group at Technische  Universität Berlin, Germany. {\tt\small \{oksuz@tu-berlin.de\}, \{molinari,raisch\}@control.tu-berlin.de}. } 
    
    \thanks{$^{2}$ H.Y. Oksuz, H. Sprekeler, and J. Raisch are with Exzellenzcluster Science of Intelligence, Technische Universität Berlin, Marchstr. 23, 10587, Berlin, Germany.}

    \thanks{$^{3}$ H. Sprekeler is with the Modelling Cognitive Processes Group at Technische  Universität Berlin, Germany.
}
}
\begin{document}
\setlength{\footnotesep}{0.1cm}

\maketitle
\thispagestyle{empty}
\pagestyle{empty}

\begin{abstract}                % Abstract of not more than 250 words.

Over-the-Air Computation is a beyond-5G communication strategy that has recently been shown to be useful for the decentralized training of machine learning models due to its efficiency. In this paper, we  propose an Over-the-Air federated learning algorithm that aims to provide fairness and robustness  through minmax optimization. By using the epigraph form of the problem at hand, we show that the proposed algorithm converges to the optimal solution of the minmax problem. Moreover, the proposed approach does not require reconstructing channel coefficients by complex encoding-decoding schemes as opposed to state-of-the-art approaches. This improves both efficiency and privacy.  
\end{abstract}

%===============================================================================

%%%%mics.%s a sig%%%%%%%%%%%%%%%%%%%%%%%%%%%%%%%%%%%%%%%%%%%%%%%%%%%%%%%%%%%%%%%%%%%%%%%%%%%

\vspace{2mm}
\section{INTRODUCTION}
\vspace{1mm}

In a traditional federated learning setting (as in \cite{mcmahan2017communication, smith2017federated,bonawitz2019towards, yang2019federated}, and some references therein), we consider a system of $N$ agents connected to a central unit, and their objective is to accomplish a machine learning task in a decentralized manner. In a supervised learning setting, each agent $i$ has its own local dataset represented by $D_i = \{d^{n}_i \}_{n=1}^{|D_i|}$, where $|D_i|$ is the number of data points and $i=1,2,\cdots,N$. The dataset $D_i$ consists of pairs of inputs $u^{n}_i$ and labels $z^{n}_i$, i.e., $d^{n}_i = (u^{n}_i , z^{n}_i)$, and the objective is to build a global parametric model that is able to predict the correct labels of the given data points. To this end, each agent $i$ uses the private local cost function
\begin{equation}\label{eq:local_cost}
g_i(\theta)= \frac{1}{|D_i|}\sum_{n=1}^{|D_i|} \mathcal{L}_i(d^{n}_i,\theta),
\end{equation} 
\noindent
where $\mathcal{L}_i(d^{n}_i,\theta)$ is the error function representing the difference between the predicted output of the model with parameter $\theta$ and the actual label of the given data point $d^{n}_i$. If we define the average of all local cost functions as the global cost function $g(\theta)$, the objective of the overall system is to solve the constrained optimization problem
\begin{equation}\label{eq:global_from_local}  
\theta^{*} = \argmin_{\theta\in \Theta }  g(\theta)=\argmin_{\theta\in \Theta }\frac{1}{N} \sum_{i=1}^{N} g_i(\theta),
\end{equation} 
\noindent 
where $\Theta  \subset \mathbb{R}^m$ is a nonempty constraint set. If the central unit had access to the datasets of all agents, a centralized gradient descent-based optimization could be employed to address the global learning task at hand \cite{nedic2020distributed}. However, in a federated learning setting, e.g., \cite{bonawitz2019towards,chellapandi2023convergence, omori2023combinatorial}, each agent has access only to its own local (possibly private) dataset. Having carried out the local optimization steps, they transmit the updated versions of the local parameter estimates to the central unit. Subsequently, the central unit aggregates these local parameter estimates and transmits the aggregated version to the agents for the next optimization steps.

For large-scale systems, where information exchange and cooperation are vital, one critical challenge for the averaging-based federated learning algorithms is heterogeneity \cite{sery2021over,gafni2022federated,ye2023heterogeneous}. When the data is heterogeneous, i.e., the fact that agents observe data from different distributions, the parameter vectors minimizing the local cost functions will in general vary significantly between different agents, and minimizing the cost function (\ref{eq:global_from_local}) may not be desirable. Instead, using a worst-case optimization problem may reflect practical requirements more accurately. Another problem that we encounter in large-scale networks is that the communication load on the overall system increases
with the number of agents \cite{konevcny2016federated,li2020federated,kairouz2021advances,oksuz2023federated}. When multiple agents transmit information at the same time and in the same frequency band, signals are affected by the physical phenomenon of interference. Standard communication protocols\footnote{In TDMA (Time Division Multiple Access), agents are assigned different time slots when they can transmit, whereas in FDMA (Frequency Division Multiple Access), different frequency bands are allocated to different users.} prevent interference by transmitting signals orthogonally. However, these techniques are not resource-efficient in the sense that they increase the need for bandwidth or the number of communication rounds, which in general leads to a decrease in total throughput and efficiency \cite{molinari2021max,frey2021over}.

In this paper, we present a federated learning algorithm that aims to improve the performance of the worst-performing agent in the system, thus providing fairness and robustness against heterogeneity. We leverage a beyond-5G communication strategy, called Over-the-Air Computation, which is more efficient as the number of  agents grows \cite{frey2021over}. Unlike the existing literature on Over-the-Air computation, e.g, \cite{yang2020federated, sery2021over}, the proposed algorithm can operate despite the inherently unknown nature of channel coefficients. We do not assume any knowledge of  (nor the capability to reconstruct) channel coefficients, and therefore we will not need extra pre-processing efforts to reconstruct the channel,  which makes the proposed scheme more time and resource-efficient. Moreover, since the channel coefficients are completely unknown, privacy is inherently guaranteed, as discussed in \cite{molinari2020exploiting}.

The remainder of this paper is organized as follows: we present the problem setup in Section II. In Section III, we introduce the proposed algorithm, whose convergence analysis is presented in Section IV. A numerical example is presented in Section V. Concluding remarks are given in Section VI.

\vspace{1mm}
\section{PROBLEM SETUP}
\vspace{1mm}

The set of real numbers is denoted by $\mathbb{R}$, and $\mathbb{R}^{m}$ represents $m$-dimensional Euclidean space. $\mathbb{N}$ and $\mathbb{N}_0$ respectively denote the set of natural numbers and the set of nonnegative integers. Given a finite set $T$, its cardinality is denoted by $|T|$. For a vector $x\in \mathbb{R}^{m}$, $x^{T}$ denotes its transpose.  Euclidean norm of the vector $x\in \mathbb{R}^{m}$ is denoted by $||x||$. The projection of  $x\in \mathbb{R}^{m}$ onto a nonempty closed convex set $S\subset \mathbb{R}^{m}$ is denoted by $\mathbf{P}_{S}(x)$, i.e., $\mathbf{P}_{S}(x) = \argmin_{s\in S}|| s-x ||$. Projection is non-expansive, i.e., $||\mathbf{P}_\Theta(x)-\mathbf{P}_\Theta(y) ||\leq ||x-y||$ holds for all $x,y \in \mathbb{R}^m$ if $\Theta$ is a nonempty closed convex set  (see \cite{bertsekas2003convex}). For given $a, b\in \mathbb{R}$, the function $\max\{a,b\}$ takes value $a$ if $a>b$, and $b$ otherwise. The expected value of a random variable $p$ is denoted by $\mathbb{E}[p]$. Given a logical argument $g_i(x)$, $\mathbf{1}_{\{g_i(x)\} }$ denotes the function that takes value 1 when $g_i(x)$ is true and 0 when $g_i(x)$ is false. For a function $f: \mathbb{R}^{m}\to (-\infty, \infty \mathbf{]}$, we define $D_f=\{x\in \mathbb{R}^{m}\big| f(x)<\infty \}$. Then, for a subgradient of a convex function $f$ with respect to $x$ at $\tilde{x}\in D_f$, denoted by $\partial_{x} f(\tilde{x})$, the inequality $f(\mathbf{\tilde{x}}) + \partial f^{T}(\mathbf{\tilde{x}})(\mathbf{x}-\mathbf{\tilde{x}}) \leq f(\mathbf{x})$ holds for all $\mathbf{x}\in D_f$.

\vspace{0.5mm}
\subsection{Minmax Reformulation}
\vspace{0.5mm}

In a federated learning setting with $N$ agents, where $V=\{1,2,\cdots, N \}$ denotes the index set, we are interested in improving the performance of the worst-performing agent by solving the following optimization problem:
\begin{align}
\label{eq:minmax_orj}
     \min_{\theta\in \Theta}\text{} \max_{i\in V} g_i(\theta),
\end{align}
\noindent
We aim to compute a parameter vector estimate minimizing the worst-case loss observed among all agents, thus providing some form of fairness \cite{mohri2019agnostic,hu2022federated}. However, it is difficult and inefficient to use (\ref{eq:minmax_orj}) directly for federated learning purposes. Instead, we can consider an alternative (epigraph) form:
\begin{alignat}{2}
\label{eq:minmax_recast}
    \nonumber
    &\min_{\alpha \in \mathbb{R}}\quad &&\alpha \\
    &\text{subject to}  \quad &&g_i(\theta)\leq \alpha \qquad  \forall \theta\in \Theta,\; i\in V;
\end{alignat}
\noindent
where the optimal value $\alpha^*$ is assumed to be finite. Moreover, 
$g_i(\theta^*)\leq \alpha^*$ holds if $\theta^*\in \Theta$ is optimal for (\ref{eq:minmax_recast}). Let $\bar{g}_i\big((\theta,\alpha)\big) = \max\{ g_i(\theta)-\alpha, 0 \}$ and $p_i > 0$ respectively denote a penalty for violating the constraint in (\ref{eq:minmax_recast}) for $i\in V$  and a weight of this penalty. Then, by following \cite{bertsekas1975necessary} and \cite{srivastava2011distributed}, one can rewrite (\ref{eq:minmax_recast})  as 
\begin{alignat}{2}
\label{eq:minmax_recast_penalty}
    &\min_{\theta\in \Theta, \alpha \in \mathbb{R}}\quad \alpha + \sum_{i=1}^{N} p_i \bar{g}_i\big((\theta,\alpha)\big). 
\end{alignat}

It has been shown in \cite{bertsekas1975necessary} that any solution for (\ref{eq:minmax_recast_penalty}) is also a solution of (\ref{eq:minmax_recast})  if $p_i > 1$ for all $i\in V$. Hence, (\ref{eq:minmax_recast_penalty})  can be considered as the global loss function for the federated learning setting.

\subsection{Over-the-Air Communication Model}\label{sec:wmac}
\vspace{0.5mm}

In wireless communication systems, the wireless multiple access channel (WMAC) model has been extensively used to characterize communication between multiple transmitters 
and a single receiver over fading channels, e.g.,  \cite{ahlswede1973multi, Giridhar2006}.
Throughout this paper, we employ the WMAC-based communication model described
by~\cite{molinari2021max,molinari2022over}. Let each agent $i\in V$  simultaneously transmit information $\mathbf{s}_i(k)\in\mathbb{R}^{m}$ to the central unit in the same frequency band at each time step $k \in \mathbb{N}_0$. However, this information is corrupted  by the channel and superimposed by the receiver, i.e., the received information by the central unit is given by 
\begin{equation}\label{eq:interference}
	\mathbf{s}^{rec}(k) =\sum_{i=1}^{N}\lambda_{i}(k)\mathbf{s}_{i}(k),
\end{equation}
\noindent
where the $\lambda_{i}(k)$ are unknown time-varying positive channel coefficients, i.e., $\lambda_{i}(k)>0$ for all $i=1,2,\cdots, N$.

Note that employing Over-the-Air computation has two main advantages:
$(i)$ the channel coefficients $\lambda_{i}(k)$ in (\ref{eq:interference})
are unknown, and it is impossible to reconstruct $s_i(k)$ from $\mathbf{s}^{rec}(k)$, which inherently provides \textit{privacy}; $(ii)$ our approach does not require any knowledge of the channel coefficients, nor do they need to be reconstructed. This makes our algorithm highly efficient, in particular for large $N$. This is demonstrated in numerical experiments in Section \ref{sec:sim}.

\vspace{1mm}
\section{FEDERATED FAIR OVER-THE-AIR LEARNING (FedFAir) ALGORITHM}
\label{sec:TheRes}
\vspace{1mm}

\begin{algorithm}[tb]
\caption{FedFAir}
\begin{algorithmic}[1]
	\REQUIRE  $\theta(0)\in\Theta$, $\alpha(0) \in \mathbb{R}$, and $p_i \in \mathbb{R}$\vspace{0.5mm}
	\ENSURE
	\FOR {each time step $k\in \mathbb{N}_0$}
        \STATE \text{The central unit computes:}
        \begin{align}
            \label{eq:server_1}
            v(k) = \alpha(k) -  \frac{\eta(k)}{N}
        \end{align}
	\STATE \text{The central unit broadcasts $\theta(k)$ and $v(k)$} 
	\STATE \text{Each agent $i$ updates its local variables:}
	\begin{align}
           \label{eq:local_agent_update_theta}
    &\theta_i(k) = \theta(k) -  \eta(k)p_i\partial_{\theta} \bar{g}_i\big((\theta(k),v(k))\big)\\ 
		\label{eq:local_agent_update_alpha}
  		&\alpha_i(k) = v(k) -  \eta(k)p_i\partial_{\alpha} \bar{g}_i\big((\theta(k),v(k))\big)\\ 
		&\varrho_i(k)=1
	\end{align}\vspace{-5mm}
	\STATE \text{Each agent $i$ transmits $\theta_i(k)$, $\alpha_i(k)$, and $\varrho_i(k)$}
	\STATE \text{The central unit receives:}
	\begin{align}
		\label{eq:ser_rec_1}
		\theta^{\text{rec}}(k)&=\sum_{i=1}^{N}\lambda_{i}(k)\theta_i(k) \\  
		\label{eq:ser_rec_3}
		\alpha^{\text{rec}}(k)&=\sum_{i=1}^{N}\lambda_{i}(k)\alpha_i(k)\\  
		\label{eq:ser_rec_2}
		\varrho^{\text{rec}}(k)&=\sum_{i=1}^{N}\lambda_{i}(k)\varrho_i(k)
	\end{align}
	\STATE \text{The central unit updates:}
	\begin{align}\label{eq:server_comp}
		\theta(k+1) &= \mathbf{P}_\Theta\Bigg(\frac{\theta^{\text{rec}}(k)}{\varrho^{\text{rec}}(k)}\Bigg) \\ %\:\: 
        \label{eq:server_comp_2}
        \alpha(k+1) &= \frac{\alpha^{\text{rec}}(k)}{\varrho^{\text{rec}}(k)}
	\end{align}
	\ENDFOR
\end{algorithmic}
\label{alg:oafl}
\end{algorithm}
\setlength{\textfloatsep}{10pt}% Remove \textfloatsep

The FedFAir algorithm is summarized in Algorithm 1. At the beginning, the central unit selects $\theta(0)\in \Theta$ and $\alpha(0)\in  \mathbb{R}$. Then, through iterations, the central unit computes (\ref{eq:server_1}) and broadcasts $v(k)$ and $\theta(k)$. Subsequently, each agent computes its own  $\alpha_i(k)$ and $\theta_i(k)$ by using the local update rules  (\ref{eq:local_agent_update_theta}) and (\ref{eq:local_agent_update_alpha}) with the step size $\eta (k)$, respectively.  Afterward, in the first round of agent-to-central unit communications, all agents transmit simultaneously (and in the same frequency band) their local values $\alpha_i(k)$ to the central unit. In the second round, each agent transmits simultaneously (and in the same frequency band) their local parameter vectors $\theta_i(k)$. Finally, in the third round, the constant $\varrho_i(k)=1$ is transmitted by all agents again simultaneously and in the same frequency band. We assume that the delays between the three rounds are sufficiently small such that the channel coefficients can be considered constant over the three rounds. According to the WMAC model, the central unit receives the vector (\ref{eq:ser_rec_1}) and two scalars given in (\ref{eq:ser_rec_3}) and (\ref{eq:ser_rec_2}) at each time step $k\in \mathbb{N}_0$. Finally, the central unit computes (\ref{eq:server_comp}) and (\ref{eq:server_comp_2}), which can be rewritten in the following form: 
\begin{align}\label{eq:developFEDOTA}
\theta(k+1) &=  \mathbf{P}_\Theta\Bigg(\sum_{i=1}^{N}h_{i}(k)	\theta_i(k)\Bigg), \\ \label{eq:developFEDOTA_2}
\alpha(k+1) &= \sum_{i=1}^{N}h_{i}(k) \alpha_i(k),
\end{align}
\noindent
where $\mathbf{P}_\Theta$ is the projection operator onto the set $\Theta$,  $h_{i}(k)=\frac{\lambda_{i}(k)}{ \sum_{i=1}^{N}\lambda_{i}(k)}$ are the normalized channel coefficients, and the $\lambda_{i}(k)$ are the unknown time-varying positive real channel coefficients. By construction, the $h_{i}(k)$ are also positive, and, for all $k\geq 0$,

\begin{equation}\label{eq:eq_asmp_h}
\sum_{i=1}^{N} h_{i}(k)=1.%\\[-2mm]  
\end{equation}

Next, we state our assumptions on individual objective functions and the step size as follows:

\begin{assumption}
\label{ass:conset}
The constraint set $\Theta\subset\mathbb{R}^{m}$ is convex and compact. As a consequence (see {\cite[Theorem~2.41, p.40]{rudin1976principles}}),   $\Theta$ is then also closed and bounded.
\end{assumption}

\begin{assumption}
	\label{ass:g_i}
	The individual objective functions $g_i(\theta)$ are convex over $\mathbb{R}^m$. Consequently, the minimax problem (\ref{eq:minmax_orj}) is also convex since the point-wise maximum of convex functions preserves convexity {\cite[Proposition~1.2.4]{bertsekas2003convex}}.  In addition, the local cost functions $\bar{g}_i\big((\theta,\alpha)\big)$ are also convex but nondifferentiable,  and their sets of subdifferentials are nonempty for all $x\in \mathbb{R}$, and $i\in V$ {\cite[Proposition~4.2.1]{bertsekas2003convex}}.
\end{assumption}

\begin{assumption}
	\label{ass:step_size} 
	The step size $\eta(k)$ in the FedFAir algorithm is chosen to satisfy $\eta(k)>0$, $\sum_{k=0}^{\infty}\eta(k) = \infty$, and $\sum_{k=0}^{\infty}\eta^{2}(k) < \infty$.
\end{assumption}

\begin{assumption}
	\label{ass:alphas_1}
    The unknown time-varying positive real channel coefficients $\lambda_i(k)$ ($i=1,2,\cdots, N$) are assumed to be independent across time and agents.
\end{assumption}

We refer here to {\cite[Ch~2.3, Ch~2.4]{Tse2012}} and {\cite[Ch~5.4]{molisch2012wireless}}, thus considering channel coefficients independent realizations (see \cite{sklar1997rayleigh}).

\section{CONVERGENCE PROPERTIES}

We start with the following lemmas that are essential for the proofs presented in the paper.

\begin{lemma}[{\cite[Thm. 3.4.2]{bartle2000introduction}}]\label{lem:subsLim}
    If a sequence  $\{a(k)\}$ of real numbers  converges to a real number $x$, then any subsequence $\{a(k_t)\}$  of $\{a(k)\}$ also converges to $x$.
\end{lemma}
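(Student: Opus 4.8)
The plan is to argue directly from the $\epsilon$-definition of convergence, leaning on the single structural fact that distinguishes a subsequence, namely that its index map $t\mapsto k_t$ is strictly increasing over $\mathbb{N}$. First I would recall what the hypothesis gives: convergence of $\{a(k)\}$ to $x$ means that for every $\epsilon>0$ there exists $K\in\mathbb{N}$ such that $|a(k)-x|<\epsilon$ for all $k\geq K$. The goal is to produce, for the same $\epsilon$, a threshold in the subsequence index beyond which $|a(k_t)-x|<\epsilon$.

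The key observation is that, by definition, a subsequence is selected along a strictly increasing sequence of indices $k_1<k_2<\cdots$ of natural numbers. I would first establish the auxiliary inequality $k_t\geq t$ for all $t\in\mathbb{N}$. This follows by a short induction: $k_1\geq 1$ since $k_1\in\mathbb{N}$, and if $k_t\geq t$, then $k_{t+1}>k_t\geq t$ forces $k_{t+1}\geq t+1$ because the indices are integers. This inequality is what transfers a tail condition on the original index $k$ into a tail condition on the subsequence index $t$.

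With this in hand the conclusion is immediate. Given $\epsilon>0$, choose $K$ from the convergence of $\{a(k)\}$ and set the subsequence threshold equal to $K$. Then for every $t\geq K$ we have $k_t\geq t\geq K$, and therefore $|a(k_t)-x|<\epsilon$. Since $\epsilon>0$ was arbitrary, this is exactly the statement that $\{a(k_t)\}$ converges to $x$.

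The only step that requires any care — and it is the main, if minor, obstacle — is the induction establishing $k_t\geq t$; the rest is a direct transcription of the definition of convergence applied to the tail of the subsequence. Since the result is a standard fact from real analysis (cited here from \cite{bartle2000introduction}), I expect no further difficulty.
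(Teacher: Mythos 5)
Your proof is correct and is essentially the canonical argument behind the result the paper cites from \cite{bartle2000introduction} (the paper itself offers no proof, only the citation): the induction giving $k_t \geq t$ followed by the tail estimate is exactly the standard textbook proof. Nothing is missing; the step you flag as requiring care, $k_t \geq t$, is indeed the only non-trivial ingredient and you handle it properly.
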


\begin{lemma}[{\cite[Lem. 11, p.50]{polyak1987introduction}}]
\label{lem:nedic}
Let $\{v(k)\}$,  $\{b(k)\}$,  $\{u(k)\}$, and  $\{c(k)\}$ be nonnegative sequences of random variables. Suppose that 
\begin{itemize}
    \item[$(i)$] $\sum_{k=0}^{\infty} b(k)< \infty$ and $\sum_{k=0}^{\infty} c(k)< \infty$ hold almost surely,
    \item[$(ii)$] For each $k\in \mathbb{N}_0$, the following holds almost surely 
    \begin{align}\label{eq_lem_nedic}
        \mathbb{E}\big[v(k+1)|F_k\big]\leq \big( 1+b(k)\big)v(k)-u(k)+c(k),
    \end{align}
    \noindent
    where $\mathbb{E}\big[v(k+1)|F_k\big]$ denotes the conditional expectation for the given $F_k = \{v(t), u(t), c(t), t=0,1, \cdots,k\}$.
\end{itemize}
 Then, $\{v(k)\}$ converges to some $v\geq0$ and $\sum_{k=0}^{\infty} u(k)< \infty$ almost surely.
\end{lemma}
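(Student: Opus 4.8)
The plan is to prove this via the classical nonnegative supermartingale convergence theorem (Doob), after two preparatory transformations: one to absorb the multiplicative factor $1+b(k)$, and one to absorb the additive term $c(k)$. The multiplicative factor is the reason hypothesis (ii) is only an \emph{almost}-supermartingale inequality rather than a genuine one, so it must be neutralized first. Throughout I treat $\{b(k)\}$ and $\{c(k)\}$ as adapted (i.e. $b(t),c(t)$ are $F_t$-measurable), as is standard for this result.

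First I would introduce the normalizing products $\beta(k) = \prod_{t=0}^{k-1}\big(1+b(t)\big)^{-1}$, with $\beta(0)=1$. Because $\sum_{t=0}^{\infty} b(t) < \infty$ almost surely, the infinite product $\prod_{t}\big(1+b(t)\big)$ converges to a finite, strictly positive limit, so $\beta(k)$ decreases to some $\beta_\infty \in (0,1]$ and in particular $0 < \beta_\infty \le \beta(k) \le 1$ for all $k$. Multiplying hypothesis (ii) by $\beta(k+1)$ and using the key identity $\beta(k+1)\big(1+b(k)\big) = \beta(k)$ gives $\mathbb{E}\big[\beta(k+1)v(k+1)\,\big|\,F_k\big] \le \beta(k)v(k) - \beta(k+1)u(k) + \beta(k+1)c(k)$. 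I would then define the shifted process $y(k) = \beta(k)v(k) - \sum_{t=0}^{k-1}\beta(t+1)c(t)$; telescoping the $c$-terms yields $\mathbb{E}\big[y(k+1)\,\big|\,F_k\big] \le y(k) - \beta(k+1)u(k) \le y(k)$, so $\{y(k)\}$ is a supermartingale. It is bounded below, since $y(k) \ge -\sum_{t=0}^{\infty}\beta(t+1)c(t) \ge -\sum_{t=0}^{\infty} c(t) > -\infty$ almost surely, using $\beta(t+1)\le 1$ and the summability of $c$.

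Next I would invoke the supermartingale convergence theorem on this bounded-below process to conclude that $y(k)$ converges almost surely to a finite limit. Since $\sum_{t}\beta(t+1)c(t)$ converges (being dominated by $\sum_{t} c(t) < \infty$) and $\beta(k)\to\beta_\infty>0$, recovering $v(k) = \beta(k)^{-1}\big(y(k) + \sum_{t=0}^{k-1}\beta(t+1)c(t)\big)$ shows that $\{v(k)\}$ converges almost surely to some finite $v$, which is nonnegative because each $v(k)\ge0$. Finally, rearranging the supermartingale inequality as $\beta(k+1)u(k) \le y(k) - \mathbb{E}\big[y(k+1)\,\big|\,F_k\big]$ and summing yields $\sum_{k} \beta(k+1)u(k) < \infty$ almost surely; since $\beta(k+1)\ge \beta_\infty>0$, this forces $\sum_{k} u(k) < \infty$ almost surely.

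The main obstacle I anticipate is that the summability conditions in (i) hold only \emph{almost surely}, whereas the textbook supermartingale convergence theorem is cleanest under an integrability or uniform-boundedness hypothesis. To bridge this gap rigorously I would introduce the stopping times $\tau_m = \inf\{k : \sum_{t=0}^{k-1} b(t) > m \ \text{or}\ \sum_{t=0}^{k-1} c(t) > m\}$, apply the convergence theorem to each stopped process $y(k\wedge\tau_m)$ (on which the relevant partial sums are bounded by $m$, restoring integrability), and then let $m\to\infty$, using that $\tau_m\to\infty$ almost surely precisely because the two series converge almost surely. Everything else is routine bookkeeping with the telescoping sums and the positivity of $\beta_\infty$.
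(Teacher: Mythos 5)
The paper never proves this lemma: it is quoted as a known result from Polyak's book (Lemma~11, p.~50), i.e., the Robbins--Siegmund almost-supermartingale convergence theorem, and is used as a black box in the proof of Theorem~1. So there is no paper proof to compare against; what you have written is a proof of the cited result itself. Your argument is the classical one --- normalize by $\beta(k)=\prod_{t=0}^{k-1}\big(1+b(t)\big)^{-1}$ to remove the multiplicative factor, subtract the partial sums of $\beta(t+1)c(t)$ to obtain a genuine supermartingale bounded below, invoke the supermartingale convergence theorem, recover $v(k)=\beta(k)^{-1}\big(y(k)+\sum_{t=0}^{k-1}\beta(t+1)c(t)\big)$ using $\beta_\infty>0$, and localize with stopping times because the summability hypotheses hold only almost surely. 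This is essentially the original Robbins--Siegmund argument, and the overall structure is sound.

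Two details would need repair in a fully rigorous write-up. First, your stopping time $\tau_m=\inf\{k:\sum_{t=0}^{k-1}b(t)>m \text{ or } \sum_{t=0}^{k-1}c(t)>m\}$ stops one step too late: on $\{k\wedge\tau_m=\tau_m\}$ the value $y(\tau_m)$ already contains the sum $\sum_{t=0}^{\tau_m-1}\beta(t+1)c(t)$, which by the definition of $\tau_m$ has just exceeded $m$, so the stopped process is only bounded below by $-\big(m+c(\tau_m-1)\big)$ rather than by the constant $-m$, and $c(\tau_m-1)$ need not be integrable. Defining $\tau_m$ via sums up to index $k$ inclusive (still a stopping time since $b,c$ are adapted) restores the constant bound. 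Second, the final step of ``rearranging $\beta(k+1)u(k)\le y(k)-\mathbb{E}[y(k+1)|F_k]$ and summing'' cannot be performed pathwise, since the right-hand side is not a realized increment of the process; either take expectations of the stopped inequality and sum (then conclude almost-sure finiteness of $\sum_k\beta(k+1)u(k)$ from finiteness of the expected sum), or, cleaner, fold the partial sums $\sum_{t=0}^{k-1}\beta(t+1)u(t)$ into the definition of $y(k)$ from the start, so a single application of the convergence theorem yields both the convergence of $\beta(k)v(k)$ and the finiteness of $\sum_t \beta(t+1)u(t)$. A third, more pedantic, point: the statement does not assume $v(0)$ integrable, so to invoke the textbook (integrable) supermartingale convergence theorem you should additionally localize on the $F_0$-measurable events $\{v(0)\le M\}$ and let $M\to\infty$. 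None of these affects the architecture of your proof; they are standard patches to the argument you outlined.
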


\begin{lemma}[{\cite[Prop. 2]{alber1998projected}}]
\label{lemma:limitUpforinfty2}
Let $\{a(k)\}$ be a nonnegative sequence and $\{b(k)\}$ an eventually nonnegative sequence, i.e., $\exists\tilde{k}\geq 0$ such that $b(k)\geq 0$  $\forall k\geq \tilde{k}$. Let $\sum_{k=0}^{\infty} a(k) =  \infty$ and $\sum_{k=0}^{\infty} a(k)b(k) < \infty$. Then, there exists a subsequence $ \{b (k_t) \}$ of $\{b (k) \}$ such that $\lim_{t\to\infty} b (k_t) = 0$.
\end{lemma}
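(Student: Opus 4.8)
The plan is to argue by contradiction: assume that no subsequence of $\{b(k)\}$ converges to $0$, and derive that $\sum_{k} a(k)b(k) = \infty$, which contradicts the hypothesis $\sum_{k=0}^{\infty} a(k)b(k) < \infty$.

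First I would exploit that $\{b(k)\}$ is eventually nonnegative in order to control its $\liminf$. Since $b(k)\ge 0$ for all $k\ge\tilde{k}$, every subsequential limit of $\{b(k)\}$ is nonnegative, so $\liminf_{k\to\infty} b(k)\ge 0$. The key reduction is that negating the conclusion forces this $\liminf$ to be strictly positive: if $\liminf_{k\to\infty} b(k)=0$, then $0$ is a subsequential limit of $\{b(k)\}$, i.e.\ some subsequence $\{b(k_t)\}$ would converge to $0$, which is precisely what we are assuming cannot happen. Hence $\liminf_{k\to\infty} b(k)>0$, and I can fix some $\epsilon>0$ together with an index $K\ge\tilde{k}$ such that $b(k)\ge\epsilon$ for all $k\ge K$ (any finite positive $\epsilon$ below the $\liminf$ works, including the case where the $\liminf$ is $+\infty$).

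With this uniform lower bound in hand, I would bound the tail of the weighted series from below:
\[
\sum_{k=K}^{\infty} a(k)b(k) \;\ge\; \epsilon \sum_{k=K}^{\infty} a(k).
\]
Because the terms $a(k)$ are nonnegative and $\sum_{k=0}^{\infty} a(k)=\infty$, deleting the finitely many initial terms $a(0),\dots,a(K-1)$ leaves a divergent tail, i.e.\ $\sum_{k=K}^{\infty} a(k)=\infty$; consequently the right-hand side is infinite and the tail $\sum_{k=K}^{\infty} a(k)b(k)$ diverges. Adding back the finitely many terms $a(k)b(k)$ for $k<K$ (which are bounded, even if some are negative for $k<\tilde{k}$) cannot restore finiteness, so $\sum_{k=0}^{\infty} a(k)b(k)=\infty$, contradicting the standing hypothesis. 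The assumption therefore fails, which yields the desired subsequence with $\lim_{t\to\infty} b(k_t)=0$.

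The argument is essentially routine, so there is no substantial obstacle; the only point requiring care is the passage from \emph{``no subsequence tends to $0$''} to the uniform bound $b(k)\ge\epsilon$ eventually. This is exactly where eventual nonnegativity of $\{b(k)\}$ is used: it guarantees $\liminf_{k\to\infty} b(k)\ge 0$, so that ruling out $0$ as a subsequential limit pins the $\liminf$ strictly above $0$ and supplies the constant $\epsilon$. One should also keep the contradiction at the level of tails, since the hypotheses bound only the full series $\sum_k a(k)b(k)$; this is harmless because removing finitely many terms affects neither the divergence of $\sum_k a(k)$ nor the convergence of $\sum_k a(k)b(k)$.
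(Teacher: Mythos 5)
Your proof is correct. The paper never proves this lemma itself---it is imported by citation as Proposition~2 of \cite{alber1998projected}---so there is no internal proof to compare against; your contradiction argument (no null subsequence forces $\liminf_{k\to\infty} b(k)>0$, hence a uniform eventual bound $b(k)\ge\epsilon$ for $k\ge K$, hence $\sum_{k=K}^{\infty}a(k)b(k)\ge\epsilon\sum_{k=K}^{\infty}a(k)=\infty$) is the standard route, and you correctly handle the two delicate points: the finitely many possibly negative terms $a(k)b(k)$ with $k<\tilde{k}$ cannot rescue finiteness of the full series, and the lower bound $\epsilon$ exists even when $\liminf_{k\to\infty}b(k)=+\infty$.
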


\begin{lemma}\cite{srivastava2011distributed}
\label{lem:srivastava2011distributed}
    Let $\alpha^* = \min_{\theta\in \Theta}\text{} \max_{i\in V} g_i(\theta)$, $\theta^* = \argmin_{\theta\in \Theta}\max_{i\in V} g_i(\theta)$,  and $p_i >1$ for all $i\in V$. Then, for $\bar{g}_i\big((\theta, \alpha)\big) = \max\{g_i(\theta) -\alpha,0\}$, we have $\sum_{i=1}^{N}p_i\bar{g}_i\big((\theta, \alpha)\big) + \alpha \geq \alpha^* $ for all $\theta\in \Theta$ and $\alpha \in \mathbb{R}$, where equality holds if and only if $\theta = \theta^*$ and $\alpha = \alpha^*$. 
\end{lemma}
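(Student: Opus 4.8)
The plan is to reduce the claim to a pointwise comparison with the worst-case loss function $G(\theta) := \max_{i\in V} g_i(\theta)$, for which $\alpha^* = \min_{\theta\in\Theta} G(\theta) = G(\theta^*)$ and hence $G(\theta)\geq \alpha^*$ for every $\theta\in\Theta$. Writing $F(\theta,\alpha) := \alpha + \sum_{i=1}^N p_i \bar{g}_i\big((\theta,\alpha)\big)$ for the objective of (\ref{eq:minmax_recast_penalty}), I would fix an arbitrary pair $(\theta,\alpha)$ with $\theta\in\Theta$, $\alpha\in\mathbb{R}$, pick an index $i^\star$ attaining $g_{i^\star}(\theta)=G(\theta)$, and split the argument according to the sign of $G(\theta)-\alpha$. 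Note that convexity plays no role here; the lemma is purely an algebraic consequence of the penalty structure and the bound $p_i>1$.

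First I would treat the case $\alpha \geq G(\theta)$. Here $g_i(\theta)-\alpha\leq 0$ for every $i$, so each penalty $\bar{g}_i\big((\theta,\alpha)\big)=\max\{g_i(\theta)-\alpha,0\}$ vanishes and $F(\theta,\alpha)=\alpha\geq G(\theta)\geq \alpha^*$, establishing the inequality in this regime.

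Next, for the case $\alpha < G(\theta)$, the dominant term is strictly active: $\bar{g}_{i^\star}\big((\theta,\alpha)\big) = G(\theta)-\alpha > 0$. Discarding the remaining nonnegative penalties and using $p_{i^\star}>1$ gives $F(\theta,\alpha) \geq \alpha + p_{i^\star}\big(G(\theta)-\alpha\big) = G(\theta) + (p_{i^\star}-1)\big(G(\theta)-\alpha\big) > G(\theta) \geq \alpha^*$, so the bound holds with \emph{strict} inequality. This is precisely the step where the hypothesis $p_i>1$ is indispensable and which I expect to be the main obstacle: with $p_i=1$ the middle expression collapses to $G(\theta)$, strictness is lost, and the equality characterization fails.

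Finally I would assemble the two cases to characterize equality. Since the second case is always strict, equality can occur only in the first case, where it forces $\alpha=\alpha^*$; combined with $\alpha\geq G(\theta)\geq\alpha^*$ this yields $G(\theta)=\alpha^*$, i.e.\ $\theta$ minimizes the worst-case loss $G$ and hence $\theta=\theta^*$. Conversely, a direct substitution confirms $F(\theta^*,\alpha^*)=\alpha^*$, since $g_i(\theta^*)\leq G(\theta^*)=\alpha^*$ makes every penalty vanish. The one delicate point I would flag explicitly is the final identification of the minimizer with $\theta^*$: the argument naturally shows only that $\theta$ must lie in $\argmin_{\theta\in\Theta} G(\theta)$, so pinning equality down to the single point $\theta=\theta^*$ presupposes that the worst-case minimizer is unique (or that $\theta^*$ denotes that set), which I would note when stating the conclusion.
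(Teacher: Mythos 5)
Your proof is correct, but there is no internal argument to compare it against: the paper imports Lemma~\ref{lem:srivastava2011distributed} by citation from \cite{srivastava2011distributed} (alongside the exact-penalty fact attributed to \cite{bertsekas1975necessary}) and never proves it, so your self-contained derivation is a genuine addition rather than a rederivation. Your route is the natural elementary one: writing $G(\theta)=\max_{i\in V}g_i(\theta)$ and splitting on the sign of $G(\theta)-\alpha$, the regime $\alpha\geq G(\theta)$ annihilates every penalty term and reduces the claim to $\alpha\geq G(\theta)\geq\alpha^*$, while the regime $\alpha<G(\theta)$ yields the strict bound $\alpha+p_{i^\star}\big(G(\theta)-\alpha\big)=G(\theta)+(p_{i^\star}-1)\big(G(\theta)-\alpha\big)>G(\theta)\geq\alpha^*$; this correctly isolates the only place where $p_i>1$ is needed, and you are also right that convexity of the $g_i$ plays no role here (it matters for the algorithm's convergence analysis, not for this pointwise penalty inequality). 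Your closing caveat is a legitimate observation about the statement itself: the ``only if'' direction of your argument delivers $\alpha=\alpha^*$ and $\theta\in\argmin_{\theta\in\Theta}G(\theta)$, so the conclusion ``$\theta=\theta^*$'' is accurate only if the minimizer is unique or if $\theta^*$ is read as denoting the solution set. The paper states the lemma with this same imprecision and relies on it in the proof of Theorem~\ref{thm:thm_main_try}, where the subsequential limit $\bar{\beta}$ is identified with $\beta^*$; without uniqueness that step should instead be read as ``$\bar{\beta}$ is optimal, so apply (\ref{eq:costlemm2}) with $\beta^*=\bar{\beta}$,'' which still gives convergence to \emph{an} optimal point. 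So your proof not only fills the omitted argument but also flags a hypothesis the paper leaves tacit.
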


We are now ready to present the main result.

\begin{thm}
\label{thm:thm_main_try}
Suppose that Assumptions \ref{ass:conset}, \ref{ass:g_i}, \ref{ass:step_size}, and \ref{ass:alphas_1}  hold. Let $\theta^* \in \Theta$ and $\alpha^* \in \mathbb{R}$ respectively be an optimal solution and the optimal value for problem (\ref{eq:minmax_orj}). If $p_i >\max\Big(1, \frac{1}{N\mathbb{E}[h_i(k)]}\Big)$ for all $i\in V$, then $\lim_{k\to\infty} \theta(k)=\theta^{*}$ and $\lim_{k\to\infty} \alpha(k)=\alpha^{*}$ with probability 1.
\end{thm}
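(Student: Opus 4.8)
The plan is to show that, although the channel coefficients are random and unknown, the FedFAir iteration is \emph{in expectation} a projected stochastic subgradient method for a penalized objective whose unique minimizer is $(\theta^*,\alpha^*)$, and then to drive the convergence with the Robbins--Siegmund--type Lemma~\ref{lem:nedic} together with Lemmas~\ref{lem:subsLim} and~\ref{lemma:limitUpforinfty2}.

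First I would collapse the per-agent updates into a single server recursion. Substituting (\ref{eq:local_agent_update_theta})--(\ref{eq:local_agent_update_alpha}) into (\ref{eq:developFEDOTA})--(\ref{eq:developFEDOTA_2}) and using $\sum_i h_i(k)=1$ gives $\theta(k+1)=\mathbf P_\Theta\big(\theta(k)-\eta(k)\sum_i h_i(k)p_i\,\partial_\theta\bar g_i\big)$ and $\alpha(k+1)=\alpha(k)-\eta(k)\big(\tfrac1N+\sum_i h_i(k)p_i\,\partial_\alpha\bar g_i\big)$, where every subgradient is evaluated at $(\theta(k),v(k))$. Taking $\mathbb E[\cdot\mid F_k]$ and using Assumption~\ref{ass:alphas_1} (so $h_i(k)$ is independent of the past), the effective per-agent weight becomes $\tilde p_i:=N\,\mathbb E[h_i(k)]p_i$, and the hypothesis $p_i>\tfrac1{N\mathbb E[h_i(k)]}$ is precisely $\tilde p_i>1$. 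Lemma~\ref{lem:srivastava2011distributed}, applied with the weights $\tilde p_i$, then shows $\tilde F(\theta,\alpha):=\alpha+\sum_i\tilde p_i\bar g_i((\theta,\alpha))$ satisfies $\tilde F(\theta,\alpha)\ge\alpha^*$ with equality only at $(\theta^*,\alpha^*)$.

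Next I would introduce the Lyapunov function $V(k):=\|\theta(k)-\theta^*\|^2+(\alpha(k)-\alpha^*)^2$ and derive the recursion needed for Lemma~\ref{lem:nedic}. Using nonexpansiveness of $\mathbf P_\Theta$ (with $\theta^*\in\Theta$), expanding both squares, and taking $\mathbb E[\cdot\mid F_k]$ produces a cross term $-\tfrac{2\eta(k)}N\big[\sum_i\tilde p_i(\partial_\theta\bar g_i^{T}(\theta(k)-\theta^*)+\partial_\alpha\bar g_i(v(k)-\alpha^*))+(v(k)-\alpha^*)\big]$ plus $O(\eta(k)^2)$ corrections coming from the deterministic shift $\alpha(k)-v(k)=\eta(k)/N$. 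The subgradient inequality for each convex $\bar g_i$ together with $\bar g_i((\theta^*,\alpha^*))=\max\{g_i(\theta^*)-\alpha^*,0\}=0$ (since $g_i(\theta^*)\le\alpha^*$) lower-bounds the bracket by $\tilde F(\theta(k),v(k))-\alpha^*\ge0$. Bounding the squared-subgradient term is where compactness enters: Assumption~\ref{ass:conset} makes $\Theta$ compact, so each convex $g_i$ has subgradients bounded on $\Theta$, and with $|\partial_\alpha\bar g_i|\le1$ and $\sum_i h_i(k)p_i\le\max_i p_i$ the second-moment term is uniformly bounded. This yields $\mathbb E[V(k+1)\mid F_k]\le V(k)-u(k)+c(k)$ with $u(k)=\tfrac{2\eta(k)}N(\tilde F(\theta(k),v(k))-\alpha^*)\ge0$ and $c(k)=O(\eta(k)^2)$ summable by Assumption~\ref{ass:step_size}.

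Finally I would run the convergence machinery. Lemma~\ref{lem:nedic} (with $b(k)\equiv0$ and its sequence taken as $V(k)$) gives that $V(k)$ converges a.s.\ to some $V_\infty\ge0$ and $\sum_k\eta(k)(\tilde F(\theta(k),v(k))-\alpha^*)<\infty$; since $\sum_k\eta(k)=\infty$, Lemma~\ref{lemma:limitUpforinfty2} yields a subsequence along which $\tilde F(\theta(k_t),v(k_t))\to\alpha^*$. Convergence of $V$ makes $\alpha(k)$ bounded, and $\Theta$ is compact, so $(\theta(k_t),v(k_t))$ is bounded; passing to a further subsequence and using continuity of the convex $\tilde F$ with Lemma~\ref{lem:srivastava2011distributed} forces this limit to equal $(\theta^*,\alpha^*)$, whence $V(k_t)\to0$. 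Lemma~\ref{lem:subsLim} then identifies $V_\infty=\lim_t V(k_t)=0$, so $\theta(k)\to\theta^*$ and $\alpha(k)\to\alpha^*$ with probability~1. The main obstacle I anticipate is the recursion step: correctly passing the random weights through the conditional expectation to obtain $\tilde p_i$, and handling the shift $v(k)=\alpha(k)-\eta(k)/N$, so that the stochastic-subgradient inequality yields a genuinely nonnegative $u(k)$ whose vanishing pins down exactly the minimizer characterized by Lemma~\ref{lem:srivastava2011distributed}.
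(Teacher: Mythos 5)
Your proposal is correct and takes essentially the same route as the paper's proof: the same Lyapunov quantity $\|\theta(k)-\theta^*\|^2+(\alpha(k)-\alpha^*)^2$, the same conditional-expectation step (via Assumption \ref{ass:alphas_1}) converting the random channel weights into effective weights $N\mathbb{E}[h_i(k)]p_i>1$, and the same chain of Lemma \ref{lem:nedic}, Lemma \ref{lemma:limitUpforinfty2}, Lemma \ref{lem:srivastava2011distributed}, and Lemma \ref{lem:subsLim} to identify the limit. The only difference is cosmetic: you keep the penalty term evaluated at $(\theta(k),v(k))$ throughout, whereas the paper shifts back to $(\theta(k),\alpha(k))$ using the bound $|\bar{g}_i(\beta(k))-\bar{g}_i(y(k))|\le \eta(k)/N$; both choices yield only summable $O(\eta^2(k))$ error terms.
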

\begin{proof} 
We start by introducing   
$\beta(k)=\begin{bmatrix} \theta (k)^T \;\; \alpha(k) \end{bmatrix}^T$, $y(k)=\begin{bmatrix} \theta (k)^T \;\; v(k) \end{bmatrix}^T$. Then, for any $\beta^{*}=\begin{bmatrix} \theta^*{}^T \;\; \alpha^*
\end{bmatrix}^T\in\Theta\times \mathbb{R}$, by using  (\ref{eq:server_1})-(\ref{eq:developFEDOTA_2}), the non-expansive property of the projection $\mathbf{P}_{\Theta}$, and  the fact that $\theta^* = \mathbf{P}_{\Theta}(\theta^*)$,  we have 
\begin{align}
		\label{eq:pre_1}
		\nonumber
		&||\beta(k+1)-\beta^{*}||^2 =\Bigg|\Bigg|\begin{bmatrix} \mathbf{P}_{\Theta}\Big(\sum_{i=1}^{N}h_{i}(k)	\theta_i(k)\Big)-\mathbf{P}_{\Theta}(\theta^*) \\ %\nonumber
    \sum_{i=1}^{N}h_{i}(k) \alpha_i(k) - \alpha^* \end{bmatrix}\Bigg|\Bigg|^2 \\ \nonumber
    & \leq  
    \Bigg|\Bigg|\begin{bmatrix}
        \theta(k)-\theta^* - \eta(k)\sum_{i=1}^{N}h_i(k)p_i\partial_{\theta} \bar{g}_i\big((\theta(k),v(k))\big)
        \\%[2mm]
        v(k) - \alpha^* - \eta(k)\sum_{i=1}^{N}h_i(k)p_i \partial_{\alpha} \bar{g}_i\big((\theta(k),v(k))\big)
    \end{bmatrix} \Bigg|\Bigg|^2 \\ 
    &= \Big|\Big|
    y(k)-\beta^{*}-\eta(k)\sum_{i=1}^{N}h_i(k)p_i\partial\bar{g}_i\big(y(k)\big)\Big)\Big|\Big|^2,
	\end{align}
\noindent 
where $\partial\bar{g}_i\big(y(k)\big)=  \begin{bmatrix}
    \partial_{\theta} g_i\big(\theta(k)\big)  &-1
\end{bmatrix}^T \mathbf{1}_{\{g_i(\theta)\geq \alpha\} }.$
We can further expand (\ref{eq:pre_1}) as
\begin{align}
		\label{eq:try_eq_1}
		\nonumber
		||\beta(k+1)-\beta^{*}||^2 &\leq \bigg|\bigg|  y(k)-\eta(k)\sum_{i=1}^{N}h_i(k)p_i\partial\bar{g}_i\big(y(k)\big)-\beta^{*} \bigg|\bigg|^2 \\  \nonumber
		&=\big| \big|  y(k)- \beta^{*} \big| \big|^{2}\\ \nonumber 
  & \quad -  2\eta(k)  \sum_{i=1}^{N}h_i(k)p_i \partial\bar{g}^T_i\big(y(k)\big)\big(y(k)- \beta^{*}\big) \\  
		& \qquad \qquad +  \Big| \Big|  \eta(k) \sum_{i=1}^{N}h_i(k)p_i\partial\bar{g}_i\big(y(k)\big)\Big| \Big|^{2}. 
	\end{align}

For the first term on the right-hand side of (\ref{eq:try_eq_1}), we have 
\begin{align}
    \label{eq:y_to_beta}
    \nonumber
    \big| \big|  y(k)- \beta^{*} \big| \big|^{2} &= \Bigg| \Bigg|  \begin{bmatrix}
        \theta(k)-\theta^* \\ \alpha(k)-\alpha^* 
    \end{bmatrix} - \begin{bmatrix}
        0 \\ \frac{\eta(k)}{N}
    \end{bmatrix}  \Bigg| \Bigg|^{2} \\ 
    &= \big| \big|  \beta(k)- \beta^{*} \big| \big|^{2} -\frac{2\eta(k)}{N}\big(\alpha(k)- \alpha^{*}\big) + \frac{\eta^2(k)}{N^2}.
\end{align}
\noindent
which follows from (\ref{eq:server_1}). Note that the boundedness of the constraint set $\Theta$  implies  $\exists  L_\Theta>0$ such that $\big|\big| \partial\bar{g}_i\big(y(k)\big) \big|\big| \leq L_\Theta$, which together with (\ref{eq:eq_asmp_h}), the convexity of the function $|| \cdot ||^{2}$, and the fact that $1<p_i\leq p_{max}$ with  $p_{max} = \max_{i} p_i$ ($i\in V$) can be used to find an  upper-bound for the last term on the right-hand side of (\ref{eq:try_eq_1}) as 
\begin{align}
		\label{eq:try_eq_2} 
		\nonumber
		\Big| \Big|  \eta(k)  \sum_{i=1}^{N}h_i(k)p_i\partial\bar{g}_i\big(y(k)\big)\Big| \Big|^{2}
		&= \eta^{2}(k) \Big| \Big|    \sum_{i=1}^{N}h_i(k)p_i\partial\bar{g}_i\big(y(k)\big)\Big| \Big|^{2}\\ \nonumber
		&\leq \eta^{2}(k)\sum_{i=1}^{N}h_i(k)p_i^{2} \Big| \Big|    \partial\bar{g}_i\big(y(k)\big)\Big| \Big|^{2}\\
		&\leq  \eta^{2}(k)M_1.
\end{align} 
\noindent
where $M_1 = p^2_{max}L_\Theta^{2}$. Let $F_k$ represent the past iterates of $\alpha (k)$ and $\theta (k)$, i.e., $F_k = \{\alpha (t),\: \theta (t), t=0, 1, \cdots, k \}$ for $k\in\mathbb{N}_0$. Subsequently, by using (\ref{eq:y_to_beta}), (\ref{eq:try_eq_2}),  and taking the expectation conditioned on $F_k$ of both sides of (\ref{eq:try_eq_1}) together with the linearity of the expectation, we obtain
\begin{align}
	\label{eq:try_eq_12}
	\nonumber
	\mathbb{E}\big[||\beta(k+1)&-\beta^{*}||^2 | F_k\big] \leq\big| \big|  \beta(k)- \beta^{*} \big| \big|^{2} -\frac{2\eta(k)}{N}(\alpha(k)- \alpha^{*})
	\\ \nonumber 
	&-2\eta(k)\mathbb{E}\big[  \sum_{i=1}^{N}h_i(k)p_i \partial\bar{g}^T_i\big(y(k)\big)\big(y(k)- \beta^{*}\big)| F_k\big] \\  
	& \qquad \qquad \qquad +  \eta^{2}(k)M_2,
	\end{align}
        \noindent
        where $M_2= M_1  + \frac{1}{N^2}$. Note that due to Assumption \ref{ass:alphas_1}, the statistics of $h_i(k)$ ($i=1,2,\cdots, N$) are independent of $h_i(t)$ for $t<k$, which also implies that  $y (k)$ and  $h_i(k)$ are statistically independent at time $k$ since  the statistics of $y (k)$ are dependent only on $h_i(t)$ for $t<k$ and $i=1,2,\cdots, N$. Hence, $\mathbb{E}\big[h_i(k)| F_k\big] = \mathbb{E}\big[h_i(k)\big]$ holds for all $k\in\mathbb{N}_0$. Then, by using  the linearity of the expectation again, we can  write the third term on the right-hand side of (\ref{eq:try_eq_12}) as 
	\begin{align}
		\label{eq:try_eq_123}
		\nonumber
    &-2\eta(k)\mathbb{E}\big[ \sum_{i=1}^{N}h_i(k)p_i \partial\bar{g}^T_i\big(y(k)\big)\big(y(k)- \beta^{*}\big)| F_k\big] \\ \nonumber
    &=-2\eta(k)  \sum_{i=1}^{N}\mathbb{E}\big[h_i(k)p_i\partial\bar{g}^T_i\big(y(k)\big)\big(y(k)- \beta^{*}\big)| F_k\big] \\ 
		&=  -2\eta(k)  \sum_{i=1}^{N}\mathbb{E}\big[h_i(k)\big] p_i\partial\bar{g}^T_i\big(y(k)\big)\big(y(k)- \beta^{*}\big). 
	\end{align}
        \noindent
        Moreover, by Assumption \ref{ass:g_i} (convexity of $\bar{g}_i(\cdot)$), we have 
	\begin{align}
	\label{eq:try_eq_3}
        \nonumber
	 \sum_{i=1}^{N}p_i\partial_{y}  \bar{g}^{T}_i\big(y(k)\big)\big(y(k)-\beta^{*}\big) &\geq \sum_{i=1}^{N} p_i\Big(\bar{g}_i\big(y(k)\big) - \bar{g}_i\big(\beta^{*}\big)\Big)\\
        &=\sum_{i=1}^{N}p_i \bar{g}_i\big(y(k)\big),
	\end{align}
    \noindent
    which follows from the fact that for any  $\beta^{*}=\begin{bmatrix} \theta^*{}^T \;\; \alpha^* \end{bmatrix}^T\in\Theta\times \mathbb{R}$, we have $g_i(\theta^*)-\alpha^*\leq 0$, and therefore $\bar{g}_i(v^*) = \max\{g_i(\theta^*)-\alpha^*,0 \} = 0$  for all $i\in V$ and  $k\in \mathbb{N}_0$.   Thus, by using (\ref{eq:try_eq_3}), (\ref{eq:try_eq_123}) can be written as
	\begin{align}
	\label{eq:try_eq_4}
        \nonumber
-2\eta(k)  \sum_{i=1}^{N}\mathbb{E}\big[h_i(k)\big] p_i\partial\bar{g}_i\big(y(k)\big)\big(y(k)- \beta^{*}\big) \\
      \leq   -2\eta(k)  \sum_{i=1}^{N}\mathbb{E}\big[h_i(k)\big]  p_i\bar{g}_i\big(y(k)\big), 
    \end{align}
    \noindent
    which can then be used to rewrite (\ref{eq:try_eq_12}) as
\begin{align}
	\label{eq:try_eq_5}
	\nonumber
	\mathbb{E}\big[||\beta(k+1)-\beta^{*}||^2| F_k\big] &\leq\big| \big|  \beta(k)- \beta^{*} \big| \big|^{2} +  \eta^{2}(k) M_2
	\\  
	&-\frac{2\eta(k)}{N}  \Big( \alpha(k)-\alpha^* +  N\sum_{i=1}^{N} \bar{p}_i\bar{g}_i\big(y(k)\big)  \Big) 
\end{align}
\noindent
where $\bar{p}_i= \mathbb{E}\big[h_i(k)\big]p_i$. We can add and subtract the term $2\eta(k)  \big(  \sum_{i=1}^{N} \bar{p}_i\bar{g}_i\big(\beta(k)\big)  \big)$ from the right hand side of (\ref{eq:try_eq_5}) to get
\begin{align}
	\label{eq:try_eq_5_2}
	\nonumber
	\mathbb{E}\big[||\beta(k+1)-\beta^{*}||^2| F_k\big] &\leq\big| \big|  \beta(k)- \beta^{*} \big| \big|^{2}+  \eta^{2}(k) M_2
	\\ \nonumber 
	&-\frac{2\eta(k)}{N}  \Big( \alpha(k)-\alpha^* +  N\sum_{i=1}^{N} \bar{p}_i\bar{g}_i\big(\beta(k)\big)  \Big)
	\\  
	&+2\eta(k) \bar{p}_{max}  \sum_{i=1}^{N} \big| \bar{g}_i\big(\beta(k)\big)-\bar{g}_i\big(y(k)\big) \big|, 
\end{align}
\noindent
which follows from the fact that $$  \sum_{i=1}^{N} \bar{p}_i\bar{g}_i\big(\beta(k)\big)   -\sum_{i=1}^{N} \bar{p}_i\bar{g}_i\big(y(k)\big)\leq  \bar{p}_{max}\sum_{i=1}^{N} \big| \bar{g}_i\big(\beta(k)\big)-\bar{g}_i\big(y(k)\big) \big|  $$
where $\bar{p}_{max} = \max_{i} \bar{p}_i$ ($i\in V$). By using (\ref{eq:server_1}) and the relation $|\text{max}\{x_1,0\}-\text{max}\{x_2,0\} | \leq | x_1 - x_2 |$ for scalars $x_1$ and $x_2$, one can obtain 
\begin{align}
\label{eq:abs_func_diff}
\nonumber
    |\bar{g}_i\big(\beta(k)\big) - \bar{g}_i\big(y(k)\big)|&\leq |g_i\big(\theta(k)\big) - \alpha(k) - g_i\big(\theta(k)\big) + v(k)  |
    \\
    &= \frac{\eta (k)}{N},  
\end{align}
\noindent
which can then be used to obtain 
\begin{align}
	\label{eq:try_eq_5_3}
	\nonumber
	\mathbb{E}\big[||\beta(k+1)-\beta^{*}||^2| F_k\big] &\leq\big| \big|  \beta(k)- \beta^{*} \big| \big|^{2}
	\\ \nonumber 
	&-\frac{2\eta(k)}{N}  \Big( \alpha(k)-\alpha^* +  \sum_{i=1}^{N} w_i\bar{g}_i\big(\beta(k)\big)  \Big)\\  
	& \qquad \qquad \qquad +  \eta^{2}(k) M_3. \\[-3mm] \nonumber
\end{align}
\noindent
where $w_i = N\bar{p}_i$ and $M_3= M_2+2p_{max}$. Suppose that $p_i>\max\Big(1, \frac{1}{N\mathbb{E}[h_i(k)]}\Big)$. Then,
\begin{align}
    \label{eq:p_2}
w_i =N\bar{p}_i = N\mathbb{E}\big[h_i(k)\big]p_i>1.
\end{align}

Note that since $\alpha^* = \min_{\theta\in \Theta}\text{} \max_{i\in V} g_i(\theta)$, by Lemma \ref{lem:srivastava2011distributed}, we have
\begin{align}
    \label{eq:costlemm3}
    \alpha(k)-\alpha^* +  \sum_{i=1}^{N} w_i\bar{g}_i\big(\beta(k)\big)  \geq 0
\end{align}
\noindent
for all $k\in \mathbb{N}_0$. Moreover, by Assumption \ref{ass:step_size}, we have $\sum_{k=0}^{\infty}  \eta^{2}(k) < \infty $, which together with (\ref{eq:try_eq_5_3}) and Lemma \ref{lem:nedic} give
\begin{align}
    \label{eq:costlemm2}
    &\lim_{k\to \infty} \big| \big|  \beta(k)- \beta^{*} \big| \big|^{2} = \vartheta\geq 0, \\
    \label{eq:thrtalemm2}
    %\intertext{and}
    &\sum_{k=0}^{\infty}   \eta(k)\Big(\alpha(k)-\alpha^* +  \sum_{i=1}^{N} w_i\bar{g}_i\big(\beta(k)\big)  \Big)  < \infty,
\end{align}
\noindent
for any $\beta^{*}=\begin{bmatrix} \theta^*{}^T \;\; \alpha^* \end{bmatrix}^T\in\Theta\times \mathbb{R}$ and $\alpha^*\in \mathbb{R}$ with probability 1. Additionally,  $\sum_{k=0}^{\infty}  \eta(k) = \infty $ also holds by Assumption \ref{ass:step_size}, which together with Lemma \ref{lemma:limitUpforinfty2} imply that there exists a subsequence such that the following holds with probability 1,
\begin{align}
\label{eq:liminf_0}
    \lim_{l\to\infty} \Big(\alpha(k_l)-\alpha^* +  \sum_{i=1}^{N} w_i\bar{g}_i\big(\beta(k_l)\big)\Big)   = 0,
\end{align}
\noindent
which also implies that along a further subsequence, we have limit points $\lim_{l\to\infty} \alpha(k_l) =\bar{\alpha}$ and $\lim_{l\to\infty} \beta(k_l) =\bar{\beta}$ with probability 1, which satisfy
\begin{align}
    \label{eq:costlemm5}
    \bar{\alpha}-\alpha^* +  \sum_{i=1}^{N} w_i\bar{g}_i\big(\bar{\beta}\big)  = 0.
\end{align}
\noindent
Hence, by Lemma \ref{lem:srivastava2011distributed} and the fact that $w_i>1$ for all $i\in V$, it follows that $\bar{\alpha}= \alpha^*$ and $\bar{\theta}= \theta^*$ are  optimal  for the minmax problem (\ref{eq:minmax_orj}) with probability 1. This implies that $\lim_{l\to\infty} \big| \big|  \beta(k_l)- \beta^{*} \big| \big|^{2} = 0$ with probability 1.
Hence, by  Lemma \ref{lem:subsLim}, we have $\vartheta = 0$ in (\ref{eq:costlemm2}) with probability 1 and that concludes the proof.
\end{proof}

\vspace{2mm}
\section{NUMERICAL EXAMPLE}
\label{sec:sim}
\vspace{1mm}

\begin{figure}[tb]
	\centering
	\includegraphics[scale=0.58]{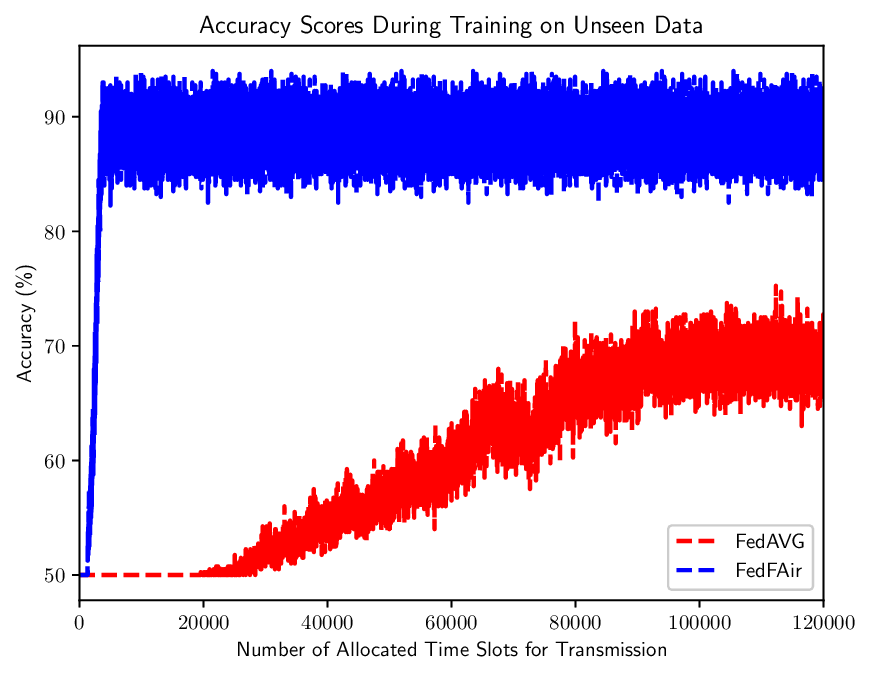}
	\caption{Comparison of the performances  FedFAir and FedAVG algorithms on unseen data during training in terms of their accuracy.}
	\label{fig:param}
\end{figure}

We consider a federated logistic regression problem, where each agent has its own private data. The objective is to collaboratively build a global model that can carry out a binary classification task. In this setting, each agent has 2 different classes of data, labeled by 0 or 1. We represent the dataset of the $i$-th agent by $D_i = \{d^{n}_i \}_{n=1}^{ |D_i|}$, where $d^{n}_i = (u^{n}_i , z^{n}_i) \in \mathbb{R}^{m}\times \{0, 1\}$. The objective is to find a separation rule so that agents can identify the correct classes of some unseen data from different classes. To this end, the following convex loss function  is used by each agent:
\begin{align}\label{eq:crossEnt}
	\nonumber
	g_i(\theta,d_i) =-\frac{1}{|D_i|}\bigg(&\sum_{n=1}^{|D_i|}  z^{n}_i \log\big(S(\theta^{T}u^{n}_i)\Big)
	\\[-1mm]   
	&+  (1-z^{n}_i) \log\big(1-S(\theta^{T}u^{n}_i)\Big)\bigg) 
\end{align}
\noindent
where  $S(x)=\frac{1}{1+e^{-x}}$ and $z^{n}_i$ is the local estimate  of  the $i$-th agent. The step size is chosen as $\eta (k)= \frac{0.1}{(k+1)^{0.6}}$. 

To demonstrate the fairness and robustness of the FedFAir algorithm, we consider a system of $12$ agents, each with  a different number of training samples and with noise injected into the data available to each agent. Note that the noise injected into the datasets of different agents is sampled from different distributions. The parameter vector is of  dimension $4$, i.e.,  $\theta\in \mathbb{R}^{4}$, and $p_i>1$ is chosen for all agents $i=1,2,\cdots, 12$. Additionally, the constraint set is given by $\Theta = \{\theta\in \mathbb{R}^{4}| \hspace{1mm} ||\theta ||\leq 10 \}$.  In order to monitor the performance of the system, we randomly sample some previously unseen test data at every iteration, and then we measure performance of the global model on this test data by computing its classification accuracy during training. As it can be seen from Fig.  \ref{fig:param}, the FedFAir algorithm achieves around $90\%$ accuracy after approximately 5000 iterations. 

Notice that the FedAVG algorithm achieves approximately $70-75\%$ of classification accuracy after 100000 iterations as it can also be seen in Fig.  \ref{fig:param}. However, due to the heterogeneity in distribution of the data, we observe that the FedAVG algorithm cannot properly identify the samples of data with labels 0. This can be seen  from the confusion matrices in Fig. \ref{fig:conf_matrices}, where the FedAVG model cannot successfully recognize the data labeled as 0, whereas the FedFAir algorithm performs much better by recognizing a large percentage of data, both labelled 0 and 1.

Additionally, we compare the FedFAir algorithm and the standard FedAVG \cite{mcmahan2017communication} algorithm  in terms of communication efficiency. For the latter, the TDMA scheme is used for communication between agents and the central unit. In this case, individual time slots are allocated for each agent to transmit its parameter vector at each communication round. After receiving parameter vectors, the central unit computes the average of them and sends it back to the agents.  Since we consider a system with $N=12$ agents, it takes $N=12$ time slots per communication round for each agent to transmit its updated parameter vector while only $3$ are needed for the FedFAir algorithm (see (\ref{eq:ser_rec_2})), independent of the value of $N$. This makes the execution of one iteration of the FedFAir algorithm $N/3 = 4$ times faster than the execution of an iteration of the FedAVG algorithm.

\begin{figure}[tb]
\advance\leftskip -2mm
\subfigure[Confusion Matrix for FedAVG]{
	\includegraphics[scale=0.4275]{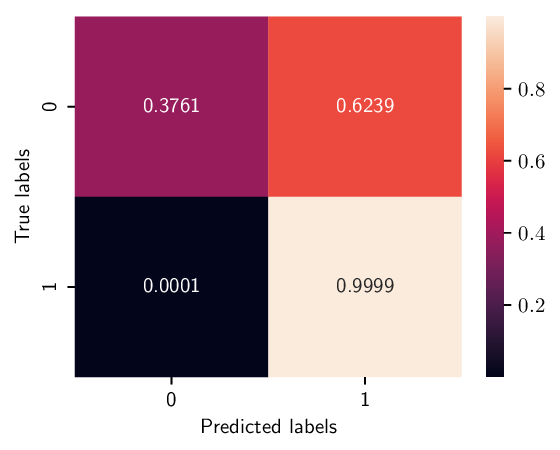}
\label{subfig:conf_matrix1}
    }
\subfigure[Confusion Matrix for FedFAir]{
	\includegraphics[scale=0.425]{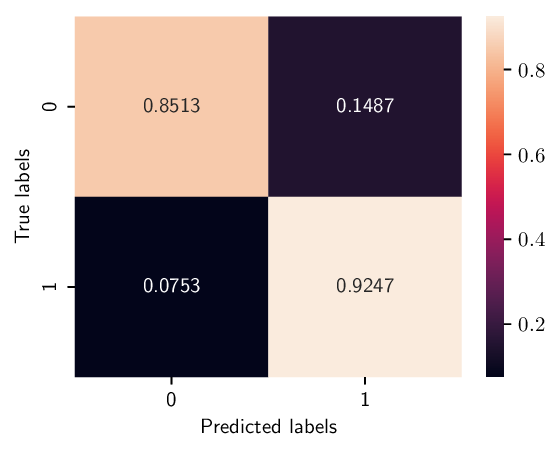}

        \label{subfig:conf_matrix2}
    }

    \caption{Comparison of Confusion Matrices}
    \label{fig:conf_matrices}
\end{figure}

\vspace{2mm}
\section{CONCLUSION}
\vspace{2mm}

In this paper, we have introduced the FedFAir algorithm which uses Over-the-Air Computation to carry out efficient decentralized learning while providing fairness and improved performance. We have shown that the FedFAir algorithm converges to an optimal solution of the minimax problem. Furthermore, we have also illustrated our theoretical findings with a numerical example.

Future research will include the development of resilient federated learning algorithms when there are malicious agents in the system.

\vspace{3mm}
\balance
\bibliography{main.bib}
\bibliographystyle{IEEEtran} 
	
\end{document}